\title{Points of non-linearity of functions generated by random neural networks}
\numberwithin{equation}{subsection}
\newcommand*{\doublerightarrow}[2]{\mathrel{
  \settowidth{\@tempdima}{$\scriptstyle#1$}
  \settowidth{\@tempdimb}{$\scriptstyle#2$}
  \ifdim\@tempdimb>\@tempdima \@tempdima=\@tempdimb\fi
  \mathop{\vcenter{
    \offinterlineskip\ialign{\hbox to\dimexpr\@tempdima+1em{##}\cr
    \rightarrowfill\cr\noalign{\kern.5ex}
    \rightarrowfill\cr}}}\limits^{\!#1}_{\!#2}}}
\newcommand*{\triplerightarrow}[1]{\mathrel{
  \settowidth{\@tempdima}{$\scriptstyle#1$}
  \mathop{\vcenter{
    \offinterlineskip\ialign{\hbox to\dimexpr\@tempdima+1em{##}\cr
    \rightarrowfill\cr\noalign{\kern.5ex}
    \rightarrowfill\cr\noalign{\kern.5ex}
    \rightarrowfill\cr}}}\limits^{\!#1}}}
\newcommand{\on}[1]{\operatorname{#1}}
\newcommand{\bb}[1]{{\mathbb{#1}}}
\newcommand{\ca}[1]{{\mathcal{#1}}}
\newcommand{\bd}[1]{{\mathbf{#1}}}
\newcommand{\abs}[1]{\lvert#1\rvert}
\newcommand{\sub}{\subseteq}
\theoremstyle{definition}
\newtheorem{definition}{Definition}[section]
\theoremstyle{plain}
\newtheorem{proposition}[definition]{Proposition}
\newtheorem{lemma}[definition]{Lemma}
\newtheorem{theorem}[definition]{Theorem}
\theoremstyle{remark}
\LetLtxMacro{\phiorig}{\phi}
\renewcommand{\phi}{\varphi}
\author{David Holmes}
\date{\today}
\newcounter{nootje}
\newcommand{\beq}{\begin{equation}}
\newcommand{\eeq}{\end{equation}}
\newcommand{\beqs}{\begin{equation*}}
\newcommand{\eeqs}{\end{equation*}}
\renewcommand{\k}{k}
\tikzset{
  symbol/.style={
    draw=none,
    every to/.append style={
      edge node={node [sloped, allow upside down, auto=false]{$#1$}}}
  }
}
\begin{document}
\maketitle
\begin{abstract} 
We consider functions from $\bb R \to \bb R$ output by a neural network with 1 hidden activation layer, arbitrary width, and ReLU activation function. We assume that the parameters of the neural network are chosen uniformly at random with respect to various probability distributions, and compute the expected distribution of the points of non-linearity. We use these results to explain why the network may be biased towards outputting functions with simpler geometry, and why certain functions with low information-theoretic complexity are nonetheless hard for a neural network to approximate. 
\end{abstract}




\newcommand{\Mtildes}{ \widetilde{\ca M}^\Sigma}
\newcommand{\sch}[1]{\textcolor{blue}{#1}}

\newcommand{\Mbar}{\overline{\ca M}}
\newcommand{\MD}{\ca M^\blacklozenge}
\newcommand{\Md}{\ca M^\lozenge}
\newcommand{\DRL}{\mathsf{DRL}}
\newcommand{\DR}{\mathsf{DR}}
\newcommand{\DRC}{\mathsf{DRC}}
\newcommand{\isom}{\stackrel{\sim}{\longrightarrow}}
\newcommand{\Ann}[1]{\on{Ann}(#1)}
\newcommand{\fm}{\mathfrak m}
\newcommand{\Mdk}{\Mbar^{\m, 1/\k}}
\newcommand{\field}{k}
\newcommand{\Mdm}{\Mbar^\m}
\newcommand{\m}{{\bd m}}
\newcommand{\cat}[1]{\bd{#1}}
\newcommand{\M}{\mathsf{M}}
\newcommand{\ghost}[1][M]{{\bar{\mathsf{#1}}}}
\newcommand{\gp}{\mathsf{gp}}
\newcommand{\fib}{\mathsf{cat}}
\newcommand{\et}{\mathsf{\acute{e}t}}
\renewcommand{\sf}[1]{\mathsf{#1}}
\newcommand{\Pic}{\mathfrak{Pic}}
\newcommand{\Sym}{\on{Sym}}
\newcommand{\Chow}{\on{CH}}
\newcommand{\Spec}{\on{Spec}}
\newcommand{\Picabs}{\mathfrak{Pic}}
\newcommand{\Picrel}{\mathfrak{Pic}^{\mathrm{rel}}}
\newcommand{\CHop}{\Chow}
\newcommand{\divCHop}{\on{divCH}}
\newcommand{\DRop}{\mathsf{DR}} 
\newcommand{\LogChow}{\on{LogCH}}
\newcommand{\divLogChow}{\on{divLogCH}}
\newcommand{\LogDR}{\sf{LogDR}}
\newcommand{\Pictdz}{\mathfrak{Jac}}
\renewcommand{\log}{\sf{log}}
\newcommand{\trop}{\sf{trop}}
\newcommand{\op}{\sf{op}}
\newcommand{\aj}{\sf{aj}}
\newcommand{\GL}{\on{GL}}
\newcommand{\J}{\sf{J}}
\newcommand{\PL}{\sf{PL}}

%

\section{Introduction}

It has been suggested \cite{Valle-Perez2018Deep-learning-g,Mingard2019Neural-networks,Mingard2020Is-SGD-a-Bayesi} that neural networks are biased in favour of outputting `simple' functions. The above papers interpret simplicity in an information-theoretic fashion, suggesting that functions output by neural networks tend to have small information-theoretic complexity. The goal of this paper is to illustrate that, at least in some contexts, more `geometric' notions of simplicity may capture this bias more accurately. 

One example of this may be seen by considering a simple periodic function such as the triangular sawtooth $x \mapsto \abs{x - \lfloor x \rfloor - \frac{1}{2}}$. This function has low information-theoretic complexity, but is relatively hard \cite{ziyin2020neural} for a neural network with ReLU (or other non-periodic) activation to learn. 

We will show that the points of non-linearity of a function output by a neural network tend to be either few in number, or clustered together in one place, depending on the setup. Either way, this illustrates why an information-theoretically simple function (such as a periodic function) can nonetheless be hard for a neural network to approximate. 

\subsection{Neural networks with random weights}

According to the heuristics of \cite{Mingard2020Is-SGD-a-Bayesi}, training a neural network by stochastic gradient descent may be well-approximated by assigning the weights and biases at random, \emph{conditional} on a good fit with the training data. For the sake of simplicity, in this preliminary work we do not use training data; we work simply with random neural networks, obtained by assigning each neuron a weight and bias chosen at random. 

Again for simplicity, we consider a restricted class of neural networks, with a single (ReLU) activation layer of width $w$, and exactly one input and one output neuron. A choice of weights and biases for the neurons thus produces a piecewise linear function from $\bb R \to \bb R$. 

A function produced in this way has a finite set of points of non-linearity; in fact there are at most $w$ such points.  Their distribution will of course depend on the distribution from which the weights and biases of the neurons are selected; below we compute precisely the distribution of the points of non-linearity for three different choices of distribution onto weights and biases of the neurons. 

Perhaps surprisingly, we will see that the distribution of the points of non-linearity depends very heavily on the distribution of the parameters. If the bias towards simple functions underlies generalisation properties of over-parameterised neural networks (as proposed in \cite{Mingard2020Is-SGD-a-Bayesi}), this may help to explain why some gradient descent schemes generalise better than others (as they approximate random sampling with respect to distributions more heavily favouring simple functions).

\subsection{Main results}
Our neural network has parameters taking values in some measurable subset of $\Theta$ of a real vector space. The probabilities of seeing a given number of points of non-linearity turn out to be highly dependent on the \emph{shape} of the parameter space $\Theta$, but independent of the \emph{size} of $\Theta$. We will consider three different `shapes': one `rectangular' (\ref{thm:rectangular}), one `Gaussian' (\ref{thm:gaussian}), and one `spherical'(\ref{theorem:spherical}). 

We consider neural networks with one hidden activation layer of with $w$ (a positive integer); see \ref{sec:param_space_notation} for a formal description of this setup. We fix $R \in (0, \infty]$, and view the resulting real-valued function as being defined on the interval $(-R, R)$. 

\subsubsection{Rectangular parameter space}
Here we fix a positive real number $T$, and then choose both the weight and the bias for each of the $w$ neurons independently and uniformly at random from the interval $(-T,T)$. In other words, the vector of biases is chosen uniformly at random from a box $[-T,T]^w \sub \bb R^w$, and the same holds for the vector of weights. 

A PL function generated in this way has at most $w$ points of non-linearity (\ref{lem:max_w}). In fact, for $R$ finite, the number of points of non-linearity follows a binomial distribution:
\begin{theorem}[{\ref{prop:rectangular_expectations}}] \label{thm:rectangular}
Suppose $R \in (0,\infty)\subseteq \bb R$. Given any $w' \in \{1, 2, \dots, w\}$, the probability of a function generated by this neural network having exactly $w'$ points of non-linearity is 
\begin{equation*}
\binom{w}{w'}\ca P^{w'}(1- \ca P)^{w - w'}
\end{equation*}
where $\binom{w}{w'}$ is the binomial coefficient, and
\begin{equation}
\ca P = \begin{cases}
\frac{R}{2} & \text{if } 0 < R \le 1\\
1-\frac{1}{2R} & \text{if } R \ge 1. \\
\end{cases}
\end{equation}
In particular, the expected number of points of non-linearity is given by 
\begin{equation*}
\bb E(\#D_\theta) = w \ca P < w. 
\end{equation*}
\end{theorem}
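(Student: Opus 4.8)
The plan is to reduce \ref{thm:rectangular} to two ingredients: an almost‑sure combinatorial identity writing $\#D_\theta$ as a sum of $w$ independent indicator variables, and an elementary planar area computation for their common success probability $\ca P$.

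First I would identify the kinks. The $i$-th hidden neuron contributes (up to its output weight) a term $\on{ReLU}(a_i x + b_i)$, which is affine away from the single point $x_i \defeq -b_i/a_i$ when $a_i \neq 0$ and changes slope there. Since the parameters are drawn from a continuous law, almost surely every $a_i \neq 0$, every per‑neuron slope jump is nonzero, and the $x_i$ are pairwise distinct (the pushforward of the uniform law on the square under $(a,b)\mapsto -b/a$ is atomless). Hence, almost surely, distinct neurons produce distinct kinks, none of which is spurious, and the set $D_\theta$ of points of non‑linearity of the generated function on $(-R,R)$ is exactly $\{x_i : 1 \le i \le w,\ x_i \in (-R,R)\}$ — the crude bound $\#D_\theta \le w$ being \ref{lem:max_w}. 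Therefore $\#D_\theta = \sum_{i=1}^w \mathbf 1[x_i \in (-R,R)]$ almost surely.

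Next, since the pairs $(a_i,b_i)$ are independent and identically distributed, so are the indicators $\mathbf 1[x_i \in (-R,R)]$, each Bernoulli with parameter $\ca P \defeq \Pr[-b/a \in (-R,R)]$ for $(a,b)$ uniform on $(-T,T)^2$. Thus $\#D_\theta$ has the Binomial$(w,\ca P)$ distribution, which gives both the probability $\binom{w}{w'}\ca P^{w'}(1-\ca P)^{w-w'}$ of exactly $w'$ points and the mean $\bb E(\#D_\theta) = w\ca P$. It remains to evaluate $\ca P$: using the symmetries $(a,b)\mapsto(\pm a,\pm b)$ of the uniform law on the square, $\ca P = \Pr[\abs{b} < R\abs{a}] = \Pr[b < Ra]$ with $(a,b)$ uniform on $(0,T)^2$, and rescaling by $T$ shows $\ca P$ is independent of $T$ and equals the area of $\{(u,v)\in(0,1)^2 : v < Ru\}$. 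For $0 < R \le 1$ that region is the triangle with vertices $(0,0),(1,0),(1,R)$, of area $R/2$; for $R \ge 1$ it is the unit square with the triangle $(0,0),(0,1),(1/R,1)$ of area $1/(2R)$ removed, hence has area $1 - 1/(2R)$. As $R$ is finite, $\ca P < 1$ in both cases, so $\bb E(\#D_\theta) = w\ca P < w$.

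The only delicate point is the almost‑sure reduction of $\#D_\theta$ to a count of in‑range kinks: one should check that the bad parameter configurations — a vanishing weight, two coincident kinks, or a kink landing exactly at an endpoint $\pm R$ — together form a Lebesgue‑null set, so that ``number of kinks in $(-R,R)$'' really does compute $\#D_\theta$. Once that is secured, the rest is purely the i.i.d.\ structure of the coordinates together with a two‑line computation of a triangle's area, and I anticipate no further obstacle.
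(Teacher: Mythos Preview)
Your proof is correct and follows essentially the same approach as the paper: both reduce $\#D_\theta$ almost surely to a sum of $w$ i.i.d.\ Bernoulli indicators (one per neuron) and then compute the common success probability $\ca P$ by the same elementary area/integration. The only cosmetic difference is that the paper rephrases the event $\{-b_i/a_i \in (-R,R)\}$ geometrically, as the image line segment $\theta_T + (-R,R)\cdot\theta_L \subseteq \bb R^w$ meeting the $i$-th coordinate hyperplane, a framing chosen to set up the Buffon-needle analogy used later for the spherical case.
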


\paragraph{Functions on an unbounded domain}\label{sec:rec_unbounded}

Suppose that we use the same parameter space $\Theta$, but we now take $R = \infty$; in other words, we view our PL functions as having domain $\bb R$. Then for almost all $\theta \in \Theta$, the PL function will have $w$ points of non-linearity. However, the distribution of these points is far from uniform. Differentiating the above result, we find that the probability density function of the distribution of these $w$ points is given by (see \ref{fig:comparison})
\begin{equation}
\bb P(\abs{x} = r) = \begin{cases}
\frac{w}{2} & \text{if } 0 \le r \le 1\\
\frac{w}{2r^2} & \text{if } 1 \le r. \\
\end{cases}
\end{equation}


\subsubsection{Gaussian parameter space}
We adopt the same notation as in the rectangular case, but instead of choosing weights and biases uniformly at random from an interval $[-T,T]$, we now fix a positive real number $\nu$ and choose both the weight and bias for each neuron at random from a normal distribution with mean 0 and variance $\nu$. 

In other words, the vector of biases (the 'affine part' at each neuron) is chosen at random from a product of normal distributions on $\bb R^w$, and the same holds for the vector of weights (the 'linear part' at each neuron). 

Almost exactly the same formulae hold as above, except the expression for $\ca P$ is different:
\begin{theorem}[{\ref{gaussian_P}}]\label{thm:gaussian}
Suppose $R$ is finite. Given any $w' \in \{1, 2, \dots, w\}$, the probability of a function generated by this neural network having exactly $w'$ points of non-linearity is 
\begin{equation*}
\binom{w}{w'}\ca P^{w'}(1- \ca P)^{w - w'}
\end{equation*}
where $\binom{w}{w'}$ is the binomial coefficient, and
\begin{equation}
\ca P = \frac{2}{\pi}\arctan R. 
\end{equation}
In particular, the expected number of points of non-linearity is given by 
\begin{equation*}
\bb E(\#D_\theta) = w \ca P < w. 
\end{equation*}
\end{theorem}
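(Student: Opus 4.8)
The plan is to reduce everything to a single-neuron computation, exactly as must have been done in the rectangular case. A single neuron contributes a point of non-linearity precisely where its pre-activation vanishes inside the domain, i.e.\ at $x = -b/a$ (where $a$ is the weight and $b$ the bias of that neuron), and this lies in $(-R,R)$ iff $\abs{b/a} < R$. Since the $w$ neurons are independent and identically distributed, the total number $\#D_\theta$ of points of non-linearity is a sum of $w$ i.i.d.\ Bernoulli random variables — modulo the measure-zero event that two neurons produce the same breakpoint, or that $a=0$ — so it is binomially distributed with parameter $\ca P := \bb P(\abs{b/a} < R)$. Thus the binomial formula and the expectation $w\ca P$ follow formally once $\ca P$ is identified; I would cite \ref{lem:max_w} for the ``at most $w$'' bound and whatever lemma in the rectangular section already packages this Bernoulli-sum argument, so that only the evaluation of $\ca P$ is genuinely new here.

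The remaining task is the computation $\ca P = \bb P\!\left(\left|\tfrac{b}{a}\right| < R\right) = \tfrac{2}{\pi}\arctan R$ when $a,b$ are independent $\ca N(0,\nu)$. The key observation is that the ratio $b/a$ of two independent centred Gaussians of equal variance is standard Cauchy, independently of $\nu$ — which already explains the stated fact that the answer depends on the shape but not the scale of $\Theta$. I would either quote this or derive it quickly: for $a,b \sim \ca N(0,\nu)$ independent, the law of $Z = b/a$ has density $\tfrac{1}{\pi}\tfrac{1}{1+z^2}$, for instance by writing $a = \rho\cos\Phi$, $b = \rho\sin\Phi$ in polar coordinates, noting $\Phi$ is uniform on $(-\pi,\pi]$ by rotational invariance of the isotropic Gaussian, and observing $Z = \tan\Phi$. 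Then
\begin{equation*}
\ca P = \bb P(\abs{Z} < R) = \int_{-R}^{R} \frac{1}{\pi}\frac{dz}{1+z^2} = \frac{2}{\pi}\arctan R,
\end{equation*}
using $\int \tfrac{dz}{1+z^2} = \arctan z$ and the symmetry of the integrand.

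There is no serious obstacle; the only points requiring care are bookkeeping ones. First, one must confirm that the events ``neuron $i$ and neuron $j$ give the same breakpoint'' and ``$a_i = 0$'' are null, so that $\#D_\theta$ really is the sum of the indicator variables $\mathbf 1[\abs{b_i/a_i}<R]$ with no coincidences — this is immediate since each is a positive-codimension condition on the continuous parameter vector. Second, one should be slightly careful about whether the breakpoint at $x=-b/a$ is a genuine point of non-linearity or a removable one (it fails to be only if the outgoing weight from that neuron is zero, again a null event, assuming the output weights are part of $\Theta$; if the model here fixes them or they are all nonzero by construction, this is moot — I would align the statement with the conventions fixed in \ref{sec:param_space_notation}). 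Granting these routine nullity checks, the proof is complete.
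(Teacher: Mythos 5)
Your proposal is correct and follows essentially the same route as the paper: reduce to the per-neuron Bernoulli event that the breakpoint $-b/a$ lands in $(-R,R)$, use independence across neurons to get the binomial law, and evaluate $\ca P$ via the rotational invariance of the isotropic Gaussian (equivalently, the ratio of two independent centred Gaussians being Cauchy), yielding $\frac{2}{\pi}\arctan R$. The paper phrases the reduction as a line segment in $\bb R^w$ crossing coordinate hyperplanes and computes $\bb P(\abs{\ca N/\ca N}\ge 1/R)$ rather than integrating the Cauchy density over $(-R,R)$, but these are the same argument.
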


\paragraph{Unbounded domain}
Again, in the case $R = \infty$, differentiating the above result shows that the distribution of the $w$ points of non-linearity is given by (see \ref{fig:comparison})
\begin{equation}
\bb P(\abs{x} = r) = \frac{2w}{\pi (1 + r^2)}. 
\end{equation}


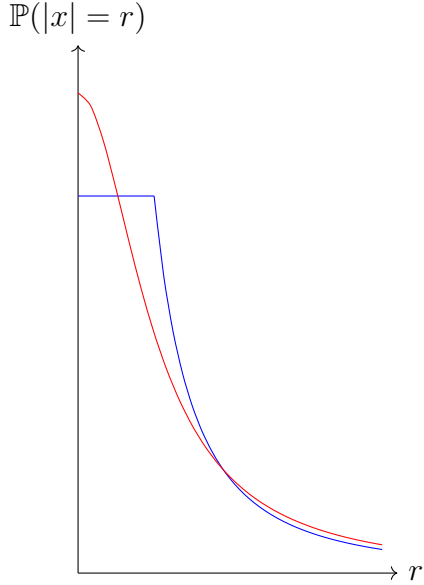
\begin{SCfigure}
\begin{tikzpicture}[scale = 2]
  \draw[->] (0, 0) -- (2.1, 0) node[right] {$r$};
  \draw[->] (0, 0) -- (0, 3.5) node[above] {$\bb P(\abs{x} = r)$};
  \draw[scale=0.5, domain=0:1, smooth, variable=\x, blue] plot ({\x}, {5});
      \draw[scale=0.5, domain=1:4, smooth, variable=\x, blue] plot ({\x}, {10/(2* \x * \x) });
    \draw[scale=0.5, domain=0:4, smooth, variable=\x, red] plot ({\x}, {6.36619/(1 + \x * \x) });
\end{tikzpicture}
\caption{\label{fig:comparison} Distribution of points of non-linearity for $w=10$ and rectangular parameter distribution (blue) and Gaussian parameter distribution (red). }
\end{SCfigure}

\subsubsection{Spherical parameter space}

We again fix a positive real number $T$, and choose the \emph{biases} of the neurons uniformly and independently at random from an interval $[-T,T]$. However, the weights (i.e. the linear part of the affine linear transformation at each neuron) are chosen uniformly at random from a sphere of radius $T$ in $\bb R^w$. 

In other words, the vector of biases is chosen at random from a product of normal distributions on $\bb R^w$, and the vector of weights is chosen uniformly at random in 
\begin{equation*}
\{ L \in \bb R^w : \abs{L} \le T\} \sub \bb R^w. 
\end{equation*}



In this context it does not seem so easy to compute the exact probability of a given number of points of non-linearity occurring. However, at least for $R \le 1$, we can compute the expected number of points of non-linearity. 
\begin{theorem}[{\ref{prop:main_spherical}}]\label{theorem:spherical}
Assume $0 < R \le 1$. 
For $w$ even we have
\begin{equation}
\bb E(\#D_\theta) = \frac{Rw2^{w}}{(w+1)\pi} \binom{w-1}{w/2}^{-1} \sim R\sqrt{2w/\pi}. 
\end{equation}
For odd $w$ we have
\begin{equation}
\bb E(\#D_\theta) = \frac{Rw^2}{2^{w-1}(w+1)} \binom{w-1}{(w-1)/2} \sim R\sqrt{2w/\pi}. 
\end{equation}
Here $\sim$ means that the ratio tends to $1$ as $w$ tends to infinity. \end{theorem}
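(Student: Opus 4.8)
The plan is to reduce the expectation to a single one-dimensional computation via linearity of expectation and exchangeability. Writing the generated function (up to an irrelevant affine term) as $\sum_{i=1}^{w} c_i\,\on{ReLU}(a_i x + b_i)$, the point of non-linearity contributed by neuron $i$ is $x_i = -b_i/a_i$ (defined for $a_i\neq 0$, which is almost sure), and it lies in the domain $(-R,R)$ precisely when $|b_i| < R|a_i|$. As in the rectangular case, the points coincide or fail to be genuine points of non-linearity only on a null set, so $\bb E(\#D_\theta) = \sum_{i=1}^w \bb P(|b_i| < R|a_i|)$. Because the weight vector is drawn from a centred ball it is exchangeable in its coordinates, and the biases are i.i.d.\ and independent of the weights, so all $w$ summands are equal: $\bb E(\#D_\theta) = w\,\bb P(|b_1| < R|a_1|)$. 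Since the event $|b_1|<R|a_1|$ is invariant under simultaneous rescaling of $(a_1,b_1)$, I may take $T=1$.

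Next I identify the two relevant marginal distributions when $T=1$: $b_1$ is uniform on $[-1,1]$, and $a_1$, the first coordinate of a uniform point in the unit ball of $\bb R^w$, has density proportional to the $(w-1)$-dimensional volume of the slice, namely $f_{a_1}(t)\propto (1-t^2)^{(w-1)/2}$ on $[-1,1]$. This is exactly where the hypothesis $R\le 1$ is used: conditionally on $a_1=t$ we have $R|t|\le 1$, so $\bb P(|b_1| < R|t| \mid a_1 = t) = R|t|$ with no truncation, and hence $\bb P(|b_1|<R|a_1|) = R\,\bb E|a_1|$. Therefore $\bb E(\#D_\theta) = wR\,\bb E|a_1|$.

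It remains to evaluate $\bb E|a_1| = \bigl(\int_{-1}^1 |t|(1-t^2)^{(w-1)/2}\,dt\bigr)\big/\bigl(\int_{-1}^1 (1-t^2)^{(w-1)/2}\,dt\bigr)$. The numerator is elementary via $u=1-t^2$, equal to $2/(w+1)$; the denominator is the Beta integral $B(\tfrac12,\tfrac{w+1}{2}) = \sqrt{\pi}\,\Gamma(\tfrac{w+1}{2})/\Gamma(\tfrac{w}{2}+1)$. This yields the uniform closed form $\bb E(\#D_\theta) = \dfrac{2wR\,\Gamma(w/2+1)}{(w+1)\sqrt{\pi}\,\Gamma((w+1)/2)}$. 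Expanding the Gamma values at integer and half-integer arguments — using $\Gamma(m+\tfrac12) = \frac{(2m)!}{4^m m!}\sqrt{\pi}$ — and rewriting the resulting factorials via $\binom{2m}{m} = 2\binom{2m-1}{m}$ gives the stated formula with $\binom{w-1}{w/2}^{-1}$ for $w$ even and with $\binom{w-1}{(w-1)/2}$ for $w$ odd (a routine parity split). Finally, the asymptotic $\Gamma(x+a)/\Gamma(x+b)\sim x^{a-b}$ with $x = w/2$, $a=1$, $b=\tfrac12$ gives $\Gamma(w/2+1)/\Gamma((w+1)/2)\sim\sqrt{w/2}$, whence $\bb E(\#D_\theta) \sim \frac{2wR}{w\sqrt{\pi}}\sqrt{w/2} = R\sqrt{2w/\pi}$.

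The only genuinely delicate point is the restriction $R\le 1$. For $R>1$ the conditional probability becomes $\min(R|t|,1)$, the clean identity $\bb P(|b_1|<R|a_1|) = R\,\bb E|a_1|$ fails, and one is left with an incomplete Beta integral admitting no comparably tidy closed form; this is presumably why the theorem is stated only for $0<R\le 1$. Everything else is bookkeeping with Beta/Gamma identities.
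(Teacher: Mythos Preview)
Your argument is correct and takes a genuinely different route from the paper. The paper proceeds geometrically: it interprets the image of $[-R,R]$ under $x\mapsto\theta_T+x\theta_L$ as a random needle in $\bb R^w$, conditions on the needle's length $2s$, invokes a higher-dimensional Buffon-type formula from Klain--Rota for the expected number of intersections with a single coordinate hyperplane, and then integrates over $s$ against the radial density of the uniform ball. You instead bypass the needle picture entirely by computing the marginal law of a single coordinate $a_1$ of a uniform point in the ball, reducing to the elementary identity $\bb P(|b_1|<R|a_1|)=R\,\bb E|a_1|$ (valid exactly when $R\le 1$), and evaluating $\bb E|a_1|$ as a Beta integral.

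Your approach is more elementary and self-contained: it requires no external reference, and it delivers a single unified closed form $\bb E(\#D_\theta)=\dfrac{2wR\,\Gamma(w/2+1)}{(w+1)\sqrt\pi\,\Gamma((w+1)/2)}$ from which the parity-split binomial expressions and the asymptotic both follow directly. The paper's route, by contrast, places the result in the context of classical geometric probability, which may be useful if one wants to generalise (e.g.\ to several input neurons or to other convex bodies for $\Theta_L$), but at the cost of a less direct computation and an external citation. Both methods make the role of the hypothesis $R\le 1$ transparent: in yours it is the absence of truncation in $\bb P(|b_1|<R|t|)$; in the paper's it is that the needle of length $2s\le 2RT$ cannot cross the periodic array of hyperplanes more than once.
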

Since $\sqrt{w}$ is much smaller than $w$, this indicates that such functions tend strongly to having few points of linearity. 


\subsection{Possible generalisations and extensions}

\subsubsection{Training data}
If the training data forces $w$ points of non-linearity, then of course they will occur with probability $1$ among parameters fitting the training data. However, the fact that we generally work with highly over-parametrised models says that this will in general not be the setup. We expect that similar results will hold (and be provable with similar techniques) in the presence of training data, as long as certain `over-parametrisation' conditions are satisfied. 

\subsubsection{Higher dimensions}

Instead of looking at a single input neuron (i.e. real valued functions), we can consider any finite number $i$ of input neurons, leading to functions $\bb R^i \to \bb R$. Here the locus of points where the function is not linear will not be finite; as such, instead of using the cardinality of that set as a measure of simplicity, we will instead use its Hausdorff measure. We believe that similar results can be shown, by similar methods. 

\subsubsection{Different activation functions}
Our results are valid not just for the ReLU activation function, but in fact any piecewise-linear activation function which has a unique point of non-linearity at the origin. Generalising to other PL activation functions will require no major changes. For differentiable activation functions which are asymptotically linear, we can replace the measure of the locus of points of non-linearity by (for example) the integral of the square of the largest eigenvalue of the Hessian. Again, we expect similar results, but different techniques will be required to prove them. 

\subsection*{Acknowledgements}
The author is grateful to Joar Skalse for helpful comments.

\section{Parameter space notation}\label{sec:param_space_notation}

The \emph{activation} is a function $\phi\colon \bb R \to \bb R$ which is linear exactly away from $0$ (for example, the ReLU activation $x \mapsto \max(x,0)$). 

We consider neural networks with one hidden activation layer, defining functions from $(-R,R)\subseteq \bb R$ to $\bb R$. We write $w$ for the width (a positive integer). Our parameter space $\Theta$ for the neural network is naturally a product of $4$ pieces:
\begin{enumerate}
\item
 a linear part in the first layer; this gives a subspace of $\bb R^w$, denoted $\Theta_{L}$
 \item a translation part in the first layer; this again gives a subspace of $\bb R^w$, denoted $\Theta_{T}$
\item
 a linear part in the final layer; this again gives a subspace of $\bb R^w$, denoted $\Theta'_{L}$
 \item a translation part in the final layer; this gives a subspace of $\bb R$, denoted $\Theta'_{T}$. 
\end{enumerate}
So $\Theta = \Theta_L \times \Theta_T \times \Theta'_L \times \Theta'_T$. A point $\theta = (\theta_L, \theta_T, \theta'_L, \theta'_T) \in \Theta$ determines a function 
\begin{equation}
f_\theta\colon \bb R \to \bb R 
\end{equation}
by the formula
\begin{equation}
x \mapsto \theta_T' + \theta_L' \cdot \phi^w(\theta_T + x \cdot \theta_L), 
\end{equation}
where $\phi^w\colon \bb R^w \to \bb R^w$ applies the activation function $\phi$ to each coordinate.

Now $\Theta'_{L}$ and $\Theta'_{T}$ have no effect on the number of points of non-linearity (outside some measure-zero subset of $\Theta_{L}'$, which we ignore). So it suffices to describe the spaces $\Theta_L$ and $\Theta_T$, and their probability measures. 

\section{Rectangular weights and biases}

We fix a positive real number $T$. We define 
\begin{equation}
\Theta_L = \Theta_T = [-T,T]^w, 
\end{equation}
a box of side-length $T$ and dimension $w$, centred at $0 \in \bb R^w$. We equip it with the lebesgue measure. In other words, 
\begin{itemize}
\item
the `translation' part of the first layer is chosen uniformly at random between $-T$ and $T$ at each neuron; 
\item the scaling factor at each neuron in the first layer is chosen uniformly at random between $-T$ and $T$. 
\end{itemize}


%

Given $\theta \in \Theta$, we write $D_\theta \subseteq (-R, R)$ for the set of points of non-linearity of the function given by the parameters $\theta$. 

\begin{lemma}\label{lem:max_w}
$\# D_\theta \le w$, and this maximum can be achieved. 
\end{lemma}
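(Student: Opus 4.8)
The plan is to analyze the structure of $f_\theta$ directly from its defining formula. Writing $\theta_L = (a_1, \dots, a_w)$ and $\theta_T = (b_1, \dots, b_w)$ and $\theta'_L = (c_1, \dots, c_w)$, we have
\[
f_\theta(x) = \theta'_T + \sum_{j=1}^w c_j\, \phi(b_j + a_j x).
\]
Since $\phi$ is affine away from $0$, each summand $x \mapsto c_j\phi(b_j + a_j x)$ is affine away from the single point where $b_j + a_j x = 0$, i.e. $x = -b_j/a_j$ (provided $a_j \neq 0$; if $a_j = 0$ the summand is globally affine and contributes no point of non-linearity). So $f_\theta$ is affine on the complement of the finite set $S = \{-b_j/a_j : a_j \neq 0\}$, which has at most $w$ elements. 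Hence $D_\theta \subseteq S$ and $\# D_\theta \le w$.

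For the claim that the bound $w$ is achieved, I would exhibit an explicit choice of parameters. Take $a_j = 1$ for all $j$, take $b_1, \dots, b_w$ to be $w$ distinct real numbers with $-b_j \in (-R,R)$ (possible since the interval is nonempty), and take $c_j = 1$ for all $j$. Then the candidate breakpoints $-b_j$ are $w$ distinct points in the domain. To see that $f_\theta$ is genuinely non-linear at each $-b_j$, note that near $x = -b_j$ only the $j$-th summand changes its affine form; concretely, the left and right derivatives of $f_\theta$ at $-b_j$ differ by exactly $c_j a_j (\phi'_+ - \phi'_-) = 1 \cdot 1 \cdot 1 \neq 0$ for the ReLU (or by the nonzero jump of $\phi'$ for a general admissible activation), since the other summands are differentiable there. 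Thus each $-b_j$ is a true point of non-linearity and $\# D_\theta = w$.

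The only mild subtlety — and the one step worth stating carefully rather than the main obstacle — is ensuring that the breakpoints contributed by distinct neurons do not accidentally cancel: a priori two summands could be non-affine at the same point $x_0$ with their jumps in derivative summing to zero. The explicit construction above sidesteps this by making all the $-b_j$ distinct, so at each breakpoint exactly one summand is responsible for the kink and there is nothing to cancel against. (If one instead wants a cleaner combinatorial statement, one can observe that for a fixed generic choice of the $b_j$ the set of $(a,c)$ for which some cancellation occurs is a positive-codimension subvariety, hence measure zero, but this is not needed for the lemma as stated since we only need existence.) This makes the "can be achieved" half immediate, and the upper bound half is just the observation about the sum of $w$ functions each with at most one breakpoint.
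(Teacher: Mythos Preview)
Your proof is correct and follows essentially the same idea as the paper's: the paper phrases it geometrically (the affine image of $(-R,R)$ in $\bb R^w$ meets at most $w$ coordinate hyperplanes, one per neuron), while you unpack the same observation in coordinates as a sum of $w$ summands each contributing at most one breakpoint. Your explicit construction for achievability is in fact more complete than the paper's proof, which leaves that half implicit.
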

\begin{proof}
Suppose the image of $X$ is not contained in a coordinate hyperplane of $\bb R^w$. Then $D_\theta$ is exactly the image under a linear map of the intersection of the image of $X$ with the coordinate hyperplanes in $\bb R^w$, of which there are at most $w$. 

On the other hand, if the image of $X$ is contained in the intersection of exactly $w'$ of the coordinate hyperplanes, then the image of $X$ hits at most $w-w'$ other coordinate hyperplanes. 
\end{proof}

Fix $w' \in \{0, 1, \dots, w\}$. 
We compute $\bb P(\# D_\theta = w')$. A point in $c \in \bb R^w$ is chosen uniformly at random in a box around $0$ of side-length $2T$. Another point $l \in \bb R^w$ is chosen uniformly at random in a box around $0$ of side-length $2RT$. Then we consider the line segment in $\bb R^w$ joining $c - l$ and $c + l$, and we want to compute the probability of this segment meeting any of the coordinate hyperplanes; given $1 \le i \le w$ write $\ca P_i$ for the probability of our line segment meeting the $i$th coordinate hyperplanes; this is independent of $i$, so we also write it $\ca P$. 

\begin{lemma}If $R \ge 2$ then 
$\ca P = 1-\frac{1}{2R}$. 
If $0 < R \le 1$ then $\ca P = \frac{r}{R}$. 
\end{lemma}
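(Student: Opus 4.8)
The plan is to reduce the statement to a two-dimensional problem: for a single neuron $i$ we have the two independent uniform variables $c_i \in [-T,T]$ and $l_i \in [-RT,RT]$, and $\ca P$ is the probability that the segment from $c-l$ to $c+l$ meets the $i$-th coordinate hyperplane. Since along that segment the $i$-th coordinate runs linearly from $c_i - l_i$ to $c_i + l_i$, the segment meets $\{x_i = 0\}$ exactly when this interval contains $0$, i.e.\ when $c_i - l_i$ and $c_i + l_i$ have opposite signs (or one of them vanishes). A short case check ($l_i > 0$, $l_i < 0$, $l_i = 0$) shows this is equivalent to $\abs{c_i} \le \abs{l_i}$, so $\ca P = \bb P(\abs{c_i} \le \abs{l_i})$.

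Next I would use the symmetry of both uniform laws about $0$ to pass to absolute values: $U \defeq \abs{c_i}/T$ and $V \defeq \abs{l_i}/(RT)$ are independent and uniform on $[0,1]$, and the inequality becomes $U \le RV$. Thus $\ca P = \bb P(U \le RV) = \int_0^1 \min(Rv,1)\,dv$. For $0 < R \le 1$ the minimum is always $Rv$, so $\ca P = R/2$; for $R \ge 1$ one splits the integral at $v = 1/R$, obtaining $\ca P = \tfrac{1}{2R} + \bigl(1 - \tfrac{1}{R}\bigr) = 1 - \tfrac{1}{2R}$. (This in fact establishes the formula for all $R \ge 1$, not just $R \ge 2$, and gives $\ca P = R/2$ in the range $0 < R \le 1$.)

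The computation itself is routine; the only point that will require care is the handling of null sets. Several boundary configurations have been tacitly discarded and must be checked to have probability zero: that $l_i = 0$; that the segment meets a coordinate hyperplane only at one of its endpoints (corresponding to $x = \pm R$, which lie outside the open domain $(-R,R)$); and that two distinct coordinates vanish at the same value of $x$. Each is a non-trivial algebraic constraint on an absolutely continuous distribution, hence null, so that using $\le$ versus $<$ above is immaterial and — for the binomial conclusion derived elsewhere — the $w$ per-neuron crossing events are genuinely independent. This bookkeeping, rather than any analytic difficulty, is the one subtle ingredient of the argument.
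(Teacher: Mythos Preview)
Your argument is correct and matches the paper's: both reduce to the single-coordinate event $\abs{c_i}\le\abs{l_i}$ and evaluate its probability by conditioning and integrating, the only cosmetic difference being that the paper conditions on $c_1$ (obtaining $\max(1-\abs{c_1}/(RT),0)$) and then integrates over $\abs{c_1}\in[0,T]$, whereas you condition on $V$. You also correctly note that the stated $\frac{r}{R}$ is a typo for $R/2$ and that the formula $1-\tfrac{1}{2R}$ in fact holds for all $R\ge 1$, not just $R\ge 2$.
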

\begin{proof}
Without loss of generality, $i=1$. Then for fixed $c$ the probability of intersecting the point $x_1 = 0$ is
\begin{equation*}
\max(1 - \frac{\abs{c_1}}{RT}, 0). 
\end{equation*}
Integrating over $\abs{c_1}$ from $0$ to $T$ yields the result. 
\end{proof}

Since the probabilities of hitting the various axes are independent, we deduce
\begin{proposition}\label{prop:rectangular_expectations}
For $w' \in \{0, 1, \dots, w\}$ the probability of a function generated by this neural network having exactly $w'$ points of non-linearity is 
\begin{equation}
\bb P(\# D_\theta = w') = \binom{w}{w'}\ca P^{w'}(1- \ca P)^{w - w'}. 
\end{equation}
The expected number of points of non-linearity is given by 
\begin{equation}
\bb E(\# D_\theta) = \sum_{w' \in \{0, \dots, w\}}\frac{\mu(\Theta_{w'})}{\mu(\Theta)} w'  = w\ca P. 
\end{equation}
\end{proposition}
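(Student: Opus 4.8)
The plan is to express $\#D_\theta$, for almost every $\theta$, as a sum of $w$ independent Bernoulli indicators, and then simply read off the binomial law and its mean; the only substantive point is a genericity reduction, everything else being formal.

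\textbf{Step 1 (reduction to counting neurons).} Following the proof of \ref{lem:max_w}, the $i$-th neuron contributes a single candidate point of non-linearity, the unique $x$ with $\theta_{T,i} + x\,\theta_{L,i} = 0$, and this point lies in $(-R,R)$ precisely when the segment joining $c - l$ and $c + l$ (in the notation above, with $c = \theta_T$ and $l = R\,\theta_L$) meets the $i$-th coordinate hyperplane. Call this event $A_i$. First I would check that, outside a Lebesgue-null subset of $\Theta$ — namely where some coordinate of $\theta'_L$ vanishes, or two neurons share the same candidate point, or a candidate point lands exactly on an endpoint $\pm R$ — one has $\#D_\theta = \sum_{i=1}^{w}\mathbf 1_{A_i}$, since no cancellation occurs in the final layer and the candidate points of distinct active neurons are then distinct.

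\textbf{Step 2 (independence and the binomial law).} The event $A_i$ depends only on the pair $(\theta_{T,i},\theta_{L,i})$, and on $\Theta_L\times\Theta_T = [-T,T]^w\times[-T,T]^w$ equipped with Lebesgue measure these $w$ pairs are mutually independent; hence $A_1,\dots,A_w$ are mutually independent, and by the preceding lemma each has probability $\ca P$. Therefore $\#D_\theta$ is a sum of $w$ i.i.d.\ $\mathrm{Bernoulli}(\ca P)$ random variables, i.e.\ it is $\mathrm{Binomial}(w,\ca P)$, which gives $\bb P(\#D_\theta = w') = \binom{w}{w'}\ca P^{w'}(1-\ca P)^{w-w'}$ for every $w' \in \{0,1,\dots,w\}$.

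\textbf{Step 3 (expectation).} By linearity of expectation $\bb E(\#D_\theta) = \sum_{i=1}^{w}\bb P(A_i) = w\ca P$; equivalently this is the mean of a $\mathrm{Binomial}(w,\ca P)$, and it agrees with the displayed sum $\sum_{w'} w'\binom{w}{w'}\ca P^{w'}(1-\ca P)^{w-w'}$. Since $\ca P < 1$ in every case covered by the preceding lemma, $\bb E(\#D_\theta) = w\ca P < w$.

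\textbf{Main obstacle.} The independence argument and the binomial bookkeeping of Steps 2--3 are routine; the one place requiring care is the genericity claim of Step 1 — verifying that the exceptional configurations (vanishing final-layer weights, coincident break points, break points exactly at $\pm R$) form a null set, and that on its complement the number of points of non-linearity is \emph{exactly} $\sum_i\mathbf 1_{A_i}$ with no further collisions or cancellations. That is the only step I expect to need an actual argument rather than bookkeeping.
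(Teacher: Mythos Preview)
Your proposal is correct and follows the same approach as the paper: the paper's entire argument is the single clause ``since the probabilities of hitting the various axes are independent, we deduce'', and you have simply unpacked this, making explicit the genericity reduction and the product-measure independence that the paper takes for granted. If anything your Step~1 is more careful than the paper, which silently ignores the measure-zero exceptional sets you flag.
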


\section{Gaussian weights and biases}

The Gaussian version is very similar, except that instead of the width $T$ of the interval, we work with the variance $\nu$ of the normal distribution. We have $\Theta_L = \Theta_T = \bb R^w$, each of which is equipped with a product of normal distributions with variance $\nu$. 

As before we write $D_\theta \subseteq (-R, R)$ for the set of points of non-linearity for the function produced by some $\theta \in \Theta$. Given $w' \in \{0, 1, \dots, w\}$, we begin by computing $\bb P(\# D_\theta = w')$. 

A point in $c \in \bb R^w$ has coordinates chosen independently from a normal distribution with variance $\nu$. Another point $l \in \bb R^w$ has coordinates chosen independently from a normal distribution with variance $R^2\nu$. Then we consider the line segment in $\bb R^w$ joining $c - l$ and $c + l$, and we want to compute the probability of this segment meeting any of the coordinate hyperplanes; given $1 \le i \le w$ write $\ca P_i$ for the probability of our line segment meeting the $i$th coordinate hyperplanes; this is independent of $i$, so we also write it $\ca P$. 

\begin{lemma}\label{gaussian_P}
\begin{equation}
\ca P = \bb P(\abs{\ca N(0, \sigma^2R^2)} \ge \abs{\ca N(0, \sigma^2)}) = \frac{2}{\pi} \arctan R. 
\end{equation}
\end{lemma}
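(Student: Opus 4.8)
The plan is to reduce the geometric claim to an elementary fact about two independent one-dimensional Gaussians, and then to evaluate the resulting probability using the rotational invariance of the standard Gaussian on $\bb R^2$ (equivalently, by recognising a Cauchy distribution).

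First I would establish the left-hand equality exactly as in the rectangular case. By definition $\ca P$ is the probability that the segment from $c-l$ to $c+l$ in $\bb R^w$ meets the hyperplane $\{x_i = 0\}$. Its $i$-th coordinate traces out the interval with endpoints $c_i - l_i$ and $c_i + l_i$, which contains $0$ if and only if $\abs{c_i} \le \abs{l_i}$ (the event $l_i = 0$ is null). Since $c_i \sim \ca N(0,\sigma^2)$ and $l_i \sim \ca N(0,\sigma^2 R^2)$ are independent (with $\sigma^2 = \nu$), this immediately gives $\ca P = \bb P(\abs{\ca N(0,\sigma^2R^2)} \ge \abs{\ca N(0,\sigma^2)})$.

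For the right-hand equality, I would rescale, writing $c_i = \sigma Z_1$ and $l_i = \sigma R Z_2$ with $Z_1, Z_2$ independent standard normals, so that the event becomes $\abs{Z_1} \le R\abs{Z_2}$, i.e. $\abs{Z_1/Z_2} \le R$. The pair $(Z_1, Z_2)$ is rotationally symmetric, hence its polar argument is uniform on $[0, 2\pi)$; the region $\{\abs{z_1} \le R\abs{z_2}\}$ has total angular measure $4\arctan R$ (a sector of half-angle $\arctan R$ around each of the two directions of the $z_2$-axis), so the probability is $\frac{4\arctan R}{2\pi} = \frac{2}{\pi}\arctan R$. Alternatively, $Z_1/Z_2$ has the standard Cauchy density $\frac{1}{\pi(1+x^2)}$, and $\int_{-R}^{R} \frac{dx}{\pi(1+x^2)} = \frac{2}{\pi}\arctan R$.

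There is no genuine obstacle in this lemma; the only step needing a little care is the first one — verifying that ``the segment meets the hyperplane'' translates precisely into $\abs{c_i} \le \abs{l_i}$, and that the two variances in play are $\sigma^2$ and $\sigma^2 R^2$ — after which the conclusion is a one-line computation with a standard distribution.
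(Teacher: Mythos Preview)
Your proposal is correct and follows essentially the same route as the paper: both reduce to two independent standard normals and invoke the rotational invariance of the bivariate Gaussian (equivalently, the Cauchy law of the ratio) to obtain $\frac{2}{\pi}\arctan R$. You are more explicit than the paper in justifying the first equality (the paper simply calls it ``the definition''), but the key idea for the second equality is identical.
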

\begin{proof}
The first equality is the definition. For the second, writing $\phi = \frac{e^{-z^2/2}}{\sqrt{2 \pi}}$ for the probability density function of $\ca N(0,1)$, the rotational symmetry of $\phi(x)\phi(y)$ yields
\begin{equation}
\begin{split}
\bb P(\abs{\ca N(0, R^2)} \ge \abs{\ca N(0, 1)}) & = \bb P(\abs{\ca N(0, R^2)/\ca N(0, 1)}  \ge 1 ) \\
& = \bb P(\abs{\ca N(0, 1)/\ca N(0, 1)}  \ge 1/R ) \\
& = \frac{2}{\pi}\arctan R. \qedhere
\end{split}
\end{equation}
\end{proof}


Since the probabilities of hitting the various axes are again independent, the exact same formulae as in \ref{prop:rectangular_expectations} hold; for $w' \in \{0, 1, \dots, w\}$ the probability of a function generated by this neural network having exactly $w'$ points of non-linearity is 
\begin{equation}
\bb P(\# D_\theta = w') = \binom{w}{w'}\ca P^{w'}(1- \ca P)^{w - w'}, 
\end{equation}
and the expected number of points of non-linearity is given by 
\begin{equation}
\bb E(\# D_\theta) = \sum_{w' \in \{0, \dots, w\}}\frac{\mu(\Theta_{w'})}{\mu(\Theta)} w'  = w\ca P. 
\end{equation}

\section{Spherical weights, uniform biases}

We again fix a positive real parameter $T$. We define $\Theta_T = [-T, T]^w \subseteq \bb R^w$, and 
\begin{equation}
\Theta_L = \{ L \in \bb R^w : \abs{L} \le T\} \sub \bb R^w, 
\end{equation}
where $\abs{L}$ is the Euclidean norm. Just as in the previous cases, we have
\begin{lemma}
For any $\theta \in \Theta$, we have $\#D_\theta \le w$, and this maximum can be achieved. 
\end{lemma}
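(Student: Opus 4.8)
The plan is to reuse the proof of \ref{lem:max_w} essentially verbatim, since neither the bound $\#D_\theta \le w$ nor the achievability of the maximum depends on the \emph{shape} of $\Theta_L$ — only on the fact that the first-layer transformation is affine in $x$ and that $\phi$ is linear away from the single point $0$.

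For the upper bound, I would introduce the affine map $X\colon (-R,R) \ra \bb R^w$, $x \mapsto \theta_T + x\cdot\theta_L$, whose image is a line (or a single point). Since $\phi$ is linear away from $0$, a value $x$ can be a point of non-linearity of $f_\theta$ only if some coordinate of $X(x)$ vanishes, i.e. only if $X(x)$ lies on one of the $w$ coordinate hyperplanes of $\bb R^w$ — post-composition with the linear map $\theta_L'$ can merge or kill non-linearities but never create them. If the image of $X$ is not contained in any coordinate hyperplane, it meets each of the $w$ hyperplanes in at most one point, so $\#D_\theta \le w$; if it lies in the intersection of exactly $w'$ of them, it meets at most $w-w'$ of the rest, again giving $\#D_\theta \le w$. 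This is the same dichotomy as in \ref{lem:max_w}.

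For achievability I would exhibit one $\theta$. Fix distinct points $x_1 < \dots < x_w$ in $(-R,R)$ and a real $\epsilon>0$ small enough that $\epsilon\sqrt{w}\le T$ and $\epsilon\max_i\abs{x_i}\le T$. Set $\theta_{L,i}=\epsilon$ for all $i$ (so $\abs{\theta_L}=\epsilon\sqrt w\le T$, hence $\theta_L\in\Theta_L$), $\theta_{T,i}=-\epsilon x_i$ (so $\theta_T\in[-T,T]^w$), and pick $\theta_L'$ with all coordinates nonzero. Then the $i$th coordinate of $X(x)$ vanishes precisely at $x=x_i$; as the $x_i$ are distinct and $\phi$ has a genuine corner at $0$, each $x_i$ is a point of non-linearity and no cancellation between neurons can occur, so $\#D_\theta=w$.

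I do not expect any real obstacle. The only two places where the spherical setup differs from the rectangular one, and hence the only points needing a word of care, are checking that the constructed weight vector lies in the ball $\{\abs{L}\le T\}$ (handled by shrinking $\epsilon$) and that the $w$ corners genuinely survive in $f_\theta$ (handled by the distinctness of the $x_i$ together with the genericity of $\theta_L'$).
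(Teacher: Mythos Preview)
Your proposal is correct and matches the paper's approach: the paper gives no proof at all for this lemma, implicitly deferring to \ref{lem:max_w}, and you do exactly that while adding the one extra check the spherical setup requires (that the constructed $\theta_L$ lies in the ball $\{\abs{L}\le T\}$). Your explicit achievability construction is in fact more careful than anything the paper writes down.
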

%


 To compute the probabilities precisely (as in \ref{prop:rectangular_expectations,gaussian_P}) seems difficult, but by a simple application of classical results from geometric probability we will be able to compute the expected number of points of nonlinearity, on small domains. More precisely, we fix a real number $R \in (0,1]$, and for $\theta \in \Theta$ we define $D_\theta$ to be the set of points of non-linearity of the resulting function $f_\theta\colon (-R,R) \to \bb R$. The expected number of points of non-linearity is given by
\begin{equation}
\bb E(\#D_\theta) = \sum_{w' \in \{0, \dots, w\}}\bb P(\#D_\theta = w') w'. 
\end{equation}
For example, if $\bb E(\#D_\theta) \approx w$ this would tell us that most choices of parameters yield $\#D_\theta = w$. On the other hand, if $\bb E(\#D_\theta) \approx 0$ this would tell us that most choice of parameter give an affine-linear function.



\begin{proposition}\label{prop:main_spherical}
Assume $0 < R \le 1$. 
For $w$ even we have
\begin{equation}
\bb E(\#D_\theta) = \frac{Rw2^{w}}{(w+1)\pi} \binom{w-1}{w/2}^{-1} \sim R\sqrt{2w/\pi}. 
\end{equation}
For odd $w$ we have
\begin{equation}
\bb E(\#D_\theta) = \frac{Rw^2}{2^{w-1}(w+1)} \binom{w-1}{(w-1)/2} \sim R\sqrt{2w/\pi}. 
\end{equation}
Here $\sim$ means that the ratio tends to $1$ as $w$ tends to infinity. 
\end{proposition}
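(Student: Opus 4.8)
The plan is to reduce the expected number of points of non-linearity to a question about the expected number of coordinate-hyperplane crossings of a random segment, and then to evaluate that expectation using the geometry of the picture. As in the rectangular and Gaussian cases, write $c \in \Theta_T = [-T,T]^w$ for the (uniform) vector of biases and $L \in \Theta_L = \{L : \abs L \le T\}$ for the (uniform) vector of weights; the points of non-linearity of $f_\theta$ on $(-R,R)$ correspond exactly to the coordinate hyperplanes $\{x_i = 0\}$ met by the segment joining $c - RL$ and $c + RL$ in $\bb R^w$. By linearity of expectation, $\bb E(\#D_\theta) = \sum_{i=1}^w \bb P(\text{segment crosses } \{x_i = 0\}) = w\, \ca P$, where $\ca P$ is the probability that the first coordinate of $c - RL$ and that of $c + RL$ have opposite signs, i.e. $\abs{R L_1} \ge \abs{c_1}$. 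Since $0 < R \le 1$, we have $\abs{R L_1} \le \abs{L_1} \le T$, and $c_1$ is uniform on $[-T,T]$ independently of $L$, so conditioning on $L_1$ gives $\ca P = \bb E_L\big[\abs{R L_1}/T\big] = (R/T)\,\bb E_L \abs{L_1}$.

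Thus everything comes down to computing $\bb E \abs{L_1}$, the expected absolute value of a single coordinate of a point drawn uniformly from the Euclidean ball of radius $T$ in $\bb R^w$; by scaling this is $T$ times the same quantity for the unit ball, so $\ca P = R\cdot \bb E\abs{U_1}$ with $U$ uniform on the unit ball. This is a classical computation: the marginal density of $U_1$ on $[-1,1]$ is proportional to $(1 - t^2)^{(w-1)/2}$ (the cross-sectional volume), so
\begin{equation*}
\bb E\abs{U_1} = \frac{\int_0^1 t (1-t^2)^{(w-1)/2}\,dt}{\int_0^1 (1-t^2)^{(w-1)/2}\,dt}.
\end{equation*}
The numerator is the elementary $\frac{1}{w+1}$, and the denominator is a Beta integral, $\frac12 B\!\left(\frac12, \frac{w+1}{2}\right) = \frac{\sqrt\pi\,\Gamma\!\left(\frac{w+1}{2}\right)}{2\,\Gamma\!\left(\frac{w}{2}+1\right)}$. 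Substituting and simplifying the Gamma factors, splitting into the cases $w$ even and $w$ odd to turn the half-integer Gammas into central binomial coefficients, gives $\ca P = \frac{w2^{w}}{(w+1)\pi}\binom{w-1}{w/2}^{-1}R$ for even $w$ and $\ca P = \frac{w^2}{2^{w-1}(w+1)}\binom{w-1}{(w-1)/2}R$ for odd $w$; multiplying by $w$ yields the two displayed formulas.

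The asymptotic $\ca P w \sim R\sqrt{2w/\pi}$ then follows from Stirling's formula applied to the central binomial coefficients (equivalently, from $\Gamma\!\left(\frac{w+1}{2}\right)/\Gamma\!\left(\frac w2 + 1\right)\sim \sqrt{2/w}$). I do not expect a serious obstacle here: the only mild subtlety is bookkeeping the parity cases and the Gamma-to-binomial conversions so that the constants come out exactly as stated, and making sure the hypothesis $R \le 1$ is used precisely where it is needed — namely to guarantee $\abs{RL_1}\le T$ so that the crossing probability is exactly $\abs{RL_1}/T$ rather than $\min(\abs{RL_1}/T, 1)$, which would spoil the clean formula for $R > 1$. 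One should also note, as in \ref{lem:max_w}, that the measure-zero locus where the image of the segment lies in a coordinate hyperplane can be ignored, and that independence of $c$ and $L$ is what licenses the conditioning step.
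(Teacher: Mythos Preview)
Your approach is correct and in fact more elementary than the paper's. The paper fixes the half-length $s$ of the segment, invokes a higher-dimensional Buffon's needle formula from Klain--Rota to obtain the expected number of crossings with a single coordinate hyperplane as $\frac{\Omega_1\Omega_{w-1}}{w\Omega_w}\frac{s}{T}$ (with $\Omega_k$ the volume of the unit $k$-ball), and then integrates this over $s\in[0,RT]$ against the radial density of the $w$-ball. You instead compute the crossing probability directly: condition on $L_1$, use that $c_1$ is uniform on $[-T,T]$ and independent of $L$ (together with $R\le 1$) to get $\ca P = (R/T)\bb E\abs{L_1}$, and evaluate $\bb E\abs{U_1}$ for $U$ uniform in the unit ball via the Beta integral. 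Your route is self-contained and avoids both the external citation and the extra integration in $s$; the paper's route makes the connection to classical geometric probability explicit. Both arrive at the same Gamma-function ratio $\frac{2\Gamma(w/2+1)}{(w+1)\sqrt\pi\,\Gamma((w+1)/2)}$ and finish with the same central-binomial asymptotics.

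One bookkeeping slip: the expressions you record for $\ca P$ are already the full expectation $w\ca P = \bb E(\#D_\theta)$. For instance at $w=2$ your formula gives $\frac{2\cdot 4}{3\pi}R = \frac{8R}{3\pi}$, whereas the single-hyperplane probability is $\ca P = \frac{4R}{3\pi}$. So the final ``multiplying by $w$'' is one multiplication too many; delete it (or divide your stated $\ca P$ by $w$) and the constants match the proposition exactly.
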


\begin{proof}
As before, only the first component 
\begin{equation}
f_1\colon \bb R \to \bb R^w; x \mapsto \theta_T + x\cdot \theta_L
\end{equation}
of the map $f_\theta$ has any impact on the number $\#D_\theta$; more precisely, $\#D_\theta$ is the number of intersection points of the image of $[-R,R]$ under $f_1$ with the coordinate hyperplanes in $\bb R^w$ (excluding the measure-zero case where the image crosses the intersection of two or more coordinate hyperplanes). 

We now relate the problem to a variation on Buffon's needle. The image of $[-R,R]$ is a line segment in $\bb R^w$, with centre a point in $[-T, T]^w$ chosen uniformly at random, and endpoint chosen uniformly at random in a sphere of radius $RT$ around that centre. We want to compute the expected number of intersection points with the coordinate hyperplanes. 

For now we fix the length $2s \in [0,2RT]$ of the needle, and compute the expectation; later we will integrate over $s$. By additivity of expectations, we are reduced to computing the expected number $\frac{1}{w}\bb E(\#D_w)$ of intersection points with a single coordinate hyperplane. 
By symmetry, the expected number of intersection points with a coordinate hyperplane is the same as the expected number of intersection points of a needle of length $2s$, dropped uniformly at random in the plane, with the subset 
\begin{equation}
\{x \in \bb R^w : x_1 \in 2T\bb Z\}. 
\end{equation}
By \cite[page 130]{Klain1997Introduction-to} this expectation is given by\footnote{We write $\Omega$ where Klain and Rota write $\omega$, to make the distinction from the width $w$ clearer. }
\begin{equation}
\bb E = \frac{\Omega_1\Omega_{w-1}}{w\Omega_w}\frac{s}{T}
\end{equation}
where for a non-negative integer $k$ we have
\begin{equation}
\Omega_{2k} = \frac{\pi^k}{k!}
\end{equation}
and
\begin{equation}
\Omega_{2k+1} = \frac{2^{2k + 1}\pi^k k!}{(2k+1)!}. 
\end{equation}
We find for even $w$ that 
\begin{equation}
\bb E = \frac{2^w s}{w \pi T}\binom{w-1}{w/2}^{-1}
\end{equation}
and for odd $w$ that 
\begin{equation}
\bb E = \frac{s}{2^{w-1} T}\binom{w-1}{(w-1)/2}. 
\end{equation}
To simplify subsequent computations, we write $\bb E' = T\bb E/s$, which depends only on $w$. 
To complete the computation of the expectations we must integrate over $s \in [0,RT]$. However, we do not integrate with respect to the uniform distribution on $[0,RT]$; rather we want the endpoint of our needle to be chosen uniformly in a sphere. As such, the expectation for hitting one hyperplane is 
\begin{equation}
\frac{1}{w}\bb E(\#D_w) = B(w, TR)^{-1} \int_{s=0}^{RT} \frac{s}{T} \bb E' S(w, s) ds
\end{equation}
where $$B(w, TR) = \frac{\pi^{w/2}}{\Gamma(\frac{w}{2} + 1)} (TR)^w$$ is the volume of a ball of radius $TR$ and dimension $w$, and 
$$S(w, s) = \frac{2\pi^{w/2}}{\Gamma(\frac{w}{2})} s^{w-1}$$ 
is the surface area of a ball of dimension $w$ and radius $s$. This turns into
\begin{equation}
\begin{split}
\frac{1}{w}\bb E(\#D_w) & =  \frac{\Gamma(\frac{w}{2} + 1)}{\pi^{w/2}(TR)^{w}} \int_{s=0}^{RT} \frac{s}{T} \bb E'  \frac{2\pi^{w/2}}{\Gamma(\frac{w}{2})} s^{w-1} ds\\
& = R \frac{w}{w+1} \bb E'. 
\end{split}
\end{equation}
For the asymptotics we apply the central binomial coefficient formula 
\begin{equation}
\binom{2k}{k} \sim \frac{4^k}{\sqrt{k\pi}}, 
\end{equation}
and for even $w$ we also use 
\begin{equation}
\binom{2k-1}{k-1} = \frac{1}{2}\binom{2k}{k}. 
\end{equation}
\end{proof}

\bibliographystyle{alpha} 
\bibliography{../../prebib.bib}

\vspace{+16 pt}

\noindent David~Holmes\\
\textsc{Mathematisch Instituut, Universiteit Leiden, Postbus 9512, 2300 RA Leiden, Netherlands} \\
  \textit{E-mail address}: \texttt{holmesdst@math.leidenuniv.nl}

\end{document}